\newtheorem{lemma}{Lemma}[section]
\newtheorem{theorem}{Theorem}[section]
\newtheorem{claim}{Claim}[section]
\newcommand{\vol}{\textrm{vol}}
\newcommand{\cost}{\textrm{cost}}
\title{An Information-theoretic Perspective of Hierarchical Clustering}
\author{Yicheng Pan \and Feng Zheng \and Bingchen Fan \footnote{
		Beihang University, Beijing, 100191, P. R.
		China. Email: \{yichengp, cenphon, fanbc\}@buaa.edu.cn.
		Correspondence: yichengp@buaa.edu.cn.}
\date{}
}
\begin{document}

\maketitle

\begin{abstract}
	A combinatorial cost function for hierarchical clustering was introduced by Dasgupta \cite{dasgupta2016cost}. It has been generalized by Cohen-Addad et al. \cite{cohen2019hierarchical} to a general form named admissible function. In this paper, we investigate hierarchical clustering from the \emph{information-theoretic} perspective and formulate a new objective function. We also establish the relationship between these two perspectives. In algorithmic aspect, we get rid of the traditional top-down and bottom-up frameworks, and propose a new one to stratify the \emph{sparsest} level of a cluster tree recursively in guide with our objective function. For practical use, our resulting cluster tree is not binary. Our algorithm called HCSE outputs a $k$-level cluster tree by a novel and interpretable mechanism to choose $k$ automatically without any hyper-parameter. Our experimental results on synthetic datasets show that HCSE has a great advantage in finding the intrinsic number of hierarchies, and the results on real datasets show that HCSE also achieves competitive costs over the popular algorithms LOUVAIN and HLP.
\end{abstract}

\section{Introduction} \label{sec:introduction}
Hierarchical clustering for graphs plays an important role in the structural analysis of a given data set. Understanding hierarchical structures on the levels of multiple granularities is fundamental in various disciplines including artificial intelligence, physics, biology, sociology, etc \cite{brown1992class,eisen1998cluster,gorban2008principal,culotta2007author}. Hierarchical clustering requires a cluster tree that represents a recursive partitioning of a graph into smaller clusters as the tree nodes get deeper. In this cluster tree, each leaf represents a graph node while each non-leaf node represents a cluster containing its descendant leaves. The root is the largest one containing all leaves.

During the last two decades, flat clustering has attracted great attentions, which breeds plenty of algorithms, such as $k$-means \cite{hartingan1979kmeans}, DBSCAN \cite{ester1996density}, spectral clustering \cite{alpert1995spectral}, and so on. From the combinatorial perspective, we have cost functions, i.e., modularity and centrality measures, to evaluate the quality of partition-based clustering. Therefore, community detection is usually formulated as an optimization problem to optimize these objectives. By contrast, no comparative cost function with a clear and reasonable combinatorial explanation was developed until Dasgupta \cite{dasgupta2016cost} introduced a cost function for cluster trees. In this definition, similarity or dissimilarity between data points is represented by weighted edges. Taking similarity scenario as an example, a cluster is a set of nodes with relatively denser intra-links compared with its inter-links, and in a good cluster tree, heavier edges tend to connect leaves whose lowest common ancestor is as deep as possible. This intuition leads to Dasgupta's cost function that is a weighted linear combination of the sizes of lowest common ancestors over all edges.

Motivated by Dasgupta's cost function, Cohen-Addad et al. \cite{cohen2019hierarchical} proposed the concept of admissible cost function. In their definition, the size of each lowest common ancestor in Dasgupta's cost function is generalized to be a function of the sizes of its left and right children. For all similarity graphs generated from a minimal ultrametric, a cluster tree achieves the minimum cost if and only if it is a generating tree that is a ``natural'' ground truth tree in an axiomatic sense therein. A necessary condition of admissibility of an objective function is that it achieves the same value for every cluster tree for a uniformly weighted clique that has no structure in common sense. However, any slight deviation of edge weights would generally separate the two end-points of a light edge on a high level of its optimal (similarity-based) cluster tree. Thus, it seems that admissible objective functions, which take Dasgupta's cost function as a specific form, ought to be an unchallenged criterion in evaluating cluster trees since they are formulated by an axiomatic approach.

However, an admissible cost function seems imperfect in practice. The arbitrariness of optima of cluster trees for cliques indicates that the division of each internal nodes on an optimal cluster tree totally neglects the \emph{balance} of its two children. Edge weight is the unique factor that decides the structure of optimal trees. But a balanced tree is commonly considered as an ideal candidate in hierarchical clustering compared to an unbalanced one. So even clustering for cliques, a balanced partition should be preferable for each internal node. At least, the height of an optimal cluster tree which is logarithm of graph size $n$ is intuitively more reasonable than that of a caterpillar shaped cluster tree whose height is $n-1$. Moreover, a simple proof would imply that the optimal cluster tree for any connected graphs is binary. This property is not always useful in practical use since a real system usually has its inherent number of hierarchies and a natural partition for each internal clusters. For instance, the natural levels of administrative division in a country is usually intrinsic, and it is not suitable to differentiate hierarchies for parallel cities in the same state. This structure cannot be obtained by simply minimizing admissible cost functions.

In this paper, we investigate the hierarchical clustering from the perspective of information theory. Our study is based on Li and Pan's structural information theory \cite{li2016structural} whose core concept named structural entropy measures the complexity of hierarchical networks. We formulate a new objective function from this point of view, which builds the bridge for combinatorial and information-theoretic perspectives for hierarchical clustering. For this cost function, the balance of cluster trees will be involved naturally as a factor just like we design optimal codes, for which the balance of probability over objects is fundamental in constructing an efficient coding tree. We also define cluster trees with a specific height, which is coincident with our cognition of natural clustering. For practical use, we develop a novel algorithm for natural hierarchical clustering for which the number of hierarchies can be determined automatically. The idea of our algorithm is essentially different from the popular recursive division or agglomeration framework. We formulate two basic operations called \emph{stretch} and \emph{compress} respectively on cluster trees to search for the sparsest level iteratively. Our algorithm HCSE terminates when a specific criterion that intuitively coincides with the natural hierarchies is met. Our extensive experiments on both synthetic and real datasets demonstrate that HCSE outperforms the present popular heuristic algorithms LOUVAIN \cite{blondel2008fast} and HLP \cite{rossi2020fast}. The latter two algorithms proceed simply by recursively invoking flat clustering algorithms based on modularity and label propagation, respectively. For both of them, the hierarchy number is solely determined by the round numbers when the algorithm terminates, for which the interpretability is quite poor. Our experimental results on synthetic datasets show that HCSE has a great advantage in finding the intrinsic number of hierarchies, and the results on real datasets show that HCSE achieves competitive costs over LOUVAIN and HLP.

We organize this paper as follows. The structural information theory and its relationship with combinatorial cost functions will be introduced in Section \ref{sec:cost_functions}, and the algorithm will be given in Section \ref{sec:algorithm}. The experiments and their results are presented in Section \ref{sec:experiments}. We conclude the paper in Section \ref{sec:conclusions}.

\section{A cost function from information-theoretic perspective}
\label{sec:cost_functions}

In this section, we introduce Li and Pan's structural information theory \cite{li2016structural} and the combinatorial cost functions of Dasgupta \cite{dasgupta2016cost} and Cohen-Addad et al. \cite{cohen2019hierarchical}. Then we propose a new cost function that is developed from structural information theory and establish the relationship between the information-theoretic and combinatorial perspectives.

\subsection{Notations}
\label{subsec:notations}

Let $G=(V,E,w)$ be an undirected weighted graph with a set of vertices $V$, a set of edges $E$ and a weight function $w:E\rightarrow\mathbb{R}^+$, where $\mathbb{R}^+$ denotes the set of all positive real numbers. An unweighted graph can be viewed as a weighted one whose weights are unit. For each vertex $u\in V$, denote by $d_u=\sum_{(u,v)\in E} w(u,v)$ the weighted degree of $u$.\footnote{From now on, whenever we say the degree of a vertex, we always refer to the weighted degree.} For a subset of vertices $S\subseteq V$, define the volume of $S$ to be the sum of degrees of vertices. We denote it by $\vol(S)=\sum_{u\in S} d_u$.

A cluster tree $T$ for graph $G$ is a rooted tree with $|V|$ leaves, each of which is labeled by a distinct vertex $v\in V$. Each non-leaf node on $T$ is labeled by a subset $S$ of $V$ that consists of all the leaves treating $S$ as ancestor. For each node $\alpha$ on $T$, denote by $\alpha^-$ the parent of $\alpha$. For each pair of leaves $u$ and $v$, denote by $u\vee v$ the least common ancestor (LCA) of them on $T$.

\subsection{Structural information and structural entropy}
\label{subsec:structural_information}

The idea of structural information is to encode a random walk with a certain rule by using a high-dimensional encoding system for a graph $G$. It is well known that a random walk, for which a neighbor is randomly chosen with probability proportional to edge weights, has a stationary distribution on vertices that is proportional to vertex degree.\footnote{For connected graphs, this stationary distribution is unique, but not for disconnected ones. Here, we consider this one for all graphs.} So to position a random walk under its stationary distribution, the amount of information needed is typically the Shannon's entropy, denoted by $$\mathcal{H}^{(1)}(G)=-\sum_{v\in V} \frac{d_v}{\vol(V)} \log \frac{d_v}{\vol(V)} \footnote{In this paper, the omitted base of logarithm is always $2$.}.$$ By Shannon's noiseless coding theorem, $\mathcal{H}^{(1)}(G)$ is the limit of average code length generated from the \emph{memoryless} source for one step of the random walk. However, dependence of locations may shorten the code length. For each level on cluster trees, the uncertainty of locations is measured by the entropy of the stationary distribution on the clusters of this level. Consider an encoding for every cluster, including the leaves. Each non-root node $\alpha$ is labeled by its order among the children of its parent $\alpha^-$. So the self-information of $\alpha$ within this local parent-children substructure is $-\log(\vol(\alpha)/\vol(\alpha^-))$, which is also roughly the length of Shannon code for $\alpha$ and its siblings. The codeword of $\alpha$ consists of the sequential labels of nodes along the unique path from the root (excluded) to itself (included). The key idea is as follows. For one step of the random walk from $u$ to $v$ in $G$, to indicate $v$, we omit from $v$'s codeword the longest common prefix of $u$ and $v$ that is exactly the codeword of $u\vee v$. This means that the random walk takes this step in the cluster $u\vee v$ (and also in $u\vee v$'s ancestors) and the uncertainty at this level may not be involved. Therefore, intuitively, a quality similarity-based cluster tree would trap the random walk with high frequency in the deep clusters that are far from the root, and long codeword of $u\vee v$ would be omitted. This shortens the average code length of the random walk. Note that we ignore the uniqueness of decoding since a practical design of codewords is not our purpose. We utilize this scheme to evaluate and differentiate hierarchical structures.

Then we formulate the above scheme and measure the average code length as follows. Given a weighted graph $G=(V,E,w)$ and a cluster tree $T$ for $G$, note that under the stationary distribution, the random walk takes one step out of a cluster $\alpha$ on $T$ with probability $g_\alpha/\vol(V)$, where $g_\alpha$ is the sum of weights of edges with exactly one end-point in $\alpha$. Therefore, the aforementioned uncertainty measured by the average code length is
\[\mathcal{H}^T (G)=-\sum_{\alpha\in T} \frac{g_\alpha}{\vol(V)} \log \frac{\vol(\alpha)}{\vol(\alpha^-)}.\footnote{For notational convenience, for the root $\lambda$ of $T$, set $\lambda^-=\lambda$. So the term for $\lambda$ in the summation is $0$.}\]
We call $\mathcal{H}^T(G)$ the \emph{structural entropy of $G$ on $T$}. We define the \emph{structural entropy of $G$} to be the minimum one among all cluster trees, denoted by
$\mathcal{H}(G)=\min_{T}\{\mathcal{H}^T (G)\}.$
Note that the structural entropy of $G$ on the trivial $1$-level cluster tree is consistent with the previously defined $\mathcal{H}^{(1)}(G)$. It doesn't have any non-trivial cluster.

\subsection{Combinatorial explanation of structural entropy}

The cost function of a cluster tree $T$ for graph $G=(V,E)$ introduced by Dasgupta \cite{dasgupta2016cost} is defined to be $c^T(G)=\sum_{(u,v)\in E} w(u,v) |u\vee v|$, where $|u\vee v|$ denotes the size of cluster $u\vee v$. The admissible cost function introduced by Cohen-Addad et al. \cite{cohen2019hierarchical} generalizes the term $|u\vee v|$ in the definition of $c^T(G)$ to be a general function $g(|u|,|v|)$, for which Dasgupta defined $g(x,y)=x+y$. For both definitions, the optimal hierarchical clustering of $G$ is in correspondence with a cluster tree of minimum cost in the combinatorial sense that heavy edges are cut as far down the tree as possible.
The following theorem establishes the relationship between structural entropy and this kind of combinatorial form of cost functions.

\begin{theorem} \label{thm:equvi_cost_func}
	For a weighted graph $G=(V,E,w)$, to minimize $H^T(G)$ (over $T$) is equivalent to minimize the cost function
	\begin{equation} \label{eqn:SE_cost_form}
		\cost^T(G)=\sum_{(u,v)\in E} w(u,v) \log\vol(u\vee v).
	\end{equation}
\end{theorem}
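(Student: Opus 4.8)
The plan is to show that $\mathcal{H}^T(G)$ and $\cost^T(G)$ differ only by a \emph{positive} multiplicative constant and an additive term that does not depend on the tree $T$. Since both quantities depend on $T$ only through the least common ancestors of edge endpoints, such a relation immediately yields the claimed equivalence: the same trees minimize both.

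First I would factor out the constant $1/\vol(V)$ and expand the logarithm of a ratio, writing
\[
\mathcal{H}^T(G) = -\frac{1}{\vol(V)} \sum_{\alpha \in T} g_\alpha \bigl(\log \vol(\alpha) - \log \vol(\alpha^-)\bigr).
\]
The crucial step is to reverse the order of summation, passing from a sum over tree nodes $\alpha$ to a sum over edges $(u,v)$. Since $g_\alpha$ is the total weight of edges with exactly one endpoint inside $\alpha$, an edge $(u,v)$ contributes $w(u,v)$ to $g_\alpha$ precisely when $\alpha$ \emph{separates} $u$ from $v$, i.e.\ when $\alpha$ lies strictly below $u \vee v$ on either the root-path of $u$ or that of $v$.

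For a fixed edge $(u,v)$, I would then collect the nodes on the $u$-side path $u = \alpha_0, \alpha_1, \dots, \alpha_k = u \vee v$ and observe that the differences $\log\vol(\alpha_i) - \log\vol(\alpha_i^-)$ telescope, leaving $\log d_u - \log\vol(u \vee v)$ (using $\vol(\{u\}) = d_u$ at the leaf and the convention $\lambda^- = \lambda$ at the root, which contributes nothing). The symmetric computation on the $v$-side gives $\log d_v - \log\vol(u\vee v)$. Summing, the total contribution of $(u,v)$ is $w(u,v)\bigl(2\log\vol(u\vee v) - \log d_u - \log d_v\bigr)$, whence
\[
\mathcal{H}^T(G) = \frac{2}{\vol(V)}\, \cost^T(G) - \frac{1}{\vol(V)} \sum_{(u,v)\in E} w(u,v)\bigl(\log d_u + \log d_v\bigr).
\]

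Finally I would note that the trailing sum depends only on the fixed vertex degrees, hence is a constant independent of $T$, while $2/\vol(V) > 0$. Thus $\mathcal{H}^T(G)$ is a strictly increasing affine function of $\cost^T(G)$, so the two are minimized by exactly the same trees. The main obstacle is the bookkeeping in the summation swap: one must argue carefully that the nodes separating $u$ from $v$ are \emph{exactly} those strictly below $u \vee v$ on the two root-paths, and that no other node of $T$ receives a contribution from this edge, so that the telescoping is clean and the only surviving terms are the leaf volumes $d_u$, $d_v$ and the common-ancestor volume $\vol(u \vee v)$.
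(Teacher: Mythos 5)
Your proof is correct and follows essentially the same route as the paper's: swap the summation from tree nodes to edges, telescope the log-volume differences along the two leaf-to-LCA paths to get $\log d_u - \log\vol(u\vee v)$ and $\log d_v - \log\vol(u\vee v)$, and observe that the residual degree term $\sum_{(u,v)\in E} w(u,v)(\log d_u + \log d_v) = \sum_{u} d_u \log d_u$ is independent of $T$. The only cosmetic difference is that the paper routes the computation through ordered pairs before symmetrizing, while you collect both endpoint contributions at once; the resulting affine relation is identical.
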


\begin{proof}
	Note that
	\begin{eqnarray*}
		H^T(G) &=& -\sum_{\alpha\in T} \frac{g_\alpha}{\vol(V)}\log\frac{\vol(\alpha)}{\vol(\alpha^-)}\\
		&=& -\sum_{\alpha\in T} \sum_{(u,v)\in g_\alpha} \frac{w(u,v)}{\vol(V)}\log\frac{\vol(\alpha)}{\vol(\alpha^-)}\\
		&=& -\sum_{(u,v)\in E} \left(\frac{w(u,v)}{\vol(V)} \sum_{\alpha:(u,v)\in g_\alpha} \log\frac{\vol(\alpha)}{\vol(\alpha^-)}\right).
	\end{eqnarray*}	
	For a single edge $(u,v)\in E$, all the terms $\log(\vol(\alpha)/\vol(\alpha^-))$ for leaf $u$ satisfying $(u,v)\in g_\alpha$ sum (over $\alpha$) up to $\log (d_u/\vol(u\vee v))$ along the unique path from $u$ to $u\vee v$. It is symmetric for $v$. Therefore, considering ordered pair $(u,v)\in E$,		
	\begin{eqnarray*}		
		H^T(G) &=& -\sum_{\text{ordered }(u,v)\in E} \frac{w(u,v)}{\vol(V)}\log\frac{d_u}{\vol(u\vee v)}\\
		&=& \frac{1}{\vol(V)} \left( -\sum_{u\in V} d_u \log d_u + \sum_{\text{ordered }(u,v)\in E} w(u,v) \log \vol(u\vee v) \right)\\
		&=& \frac{1}{\vol(V)} \left( -\sum_{u\in V} d_u \log d_u + 2\cdot\sum_{(u,v)\in E} w(u,v) \log \vol(u\vee v) \right).
	\end{eqnarray*}
	The second equality follows from the fact $\sum_{u\in V}d_u=\sum_{\text{ordered }(u,v)\in E} w(u,v)=\vol(V)$ and the last equality from the symmetry of $(u,v)$. Since the first summation is independent of $T$, to minimize $H^T(G)$ is equivalent to minimize $\sum_{\{u,v\}\in E} w(u,v)\log \vol(u\vee v)$.
\end{proof}

Theorem \ref{thm:equvi_cost_func} indicates that when we view $g$ as a function of vertices rather than of numbers and define $g(u,v)=\log \vol(u\vee v)$, the ``admissible'' function becomes equivalent to structural entropy in evaluating cluster trees, although it is not admissible any more.

So what is the difference between these two cost functions? As stated by Cohen-Addad et al. \cite{cohen2019hierarchical}, an important axiomatic hypothesis for admissible function, thus also for Dasgupta's cost function, is that the cost for every binary cluster tree of an unweighted clique is identical. This means that any binary tree for clustering on cliques is reasonable, which coincides with the common sense that structureless datasets can be organized hierarchically free. However, for structural entropy, the following theorem indicates that balanced organization is of importance even though for structureless dataset.

\begin{theorem} \label{thm:SE_for_cliques}
	For any positive integer $n$, let $K_n$ be the clique of $n$ vertices with identical weight on every edge. Then a cluster tree $T$ of $K_n$ achieves minimum structural entropy if and only if $T$ is a balanced binary tree, that is, the two children clusters of each sub-tree of $T$ have difference in size at most $1$.
\end{theorem}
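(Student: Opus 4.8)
The plan is to reduce the problem to a purely combinatorial recursion on cluster sizes. Take the edge weights to be unit (the minimizer is scale-invariant). Then every vertex has degree $n-1$, a cluster $\alpha$ of size $s$ has $\vol(\alpha)=s(n-1)$, and by Theorem \ref{thm:equvi_cost_func} minimizing $\mathcal{H}^T(K_n)$ is equivalent to minimizing $\cost^T(K_n)=\sum_{(u,v)}\log\vol(u\vee v)=\binom{n}{2}\log(n-1)+\sum_{(u,v)}\log|u\vee v|$. As the first term is independent of $T$, it suffices to minimize $F(T):=\sum_{(u,v)}\log|u\vee v|$, a quantity depending only on the multiset of cluster sizes. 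At a node of size $m$ whose children have sizes $m_1,\dots,m_k$, the pairs separated exactly there number $\binom{m}{2}-\sum_i\binom{m_i}{2}$ and each contributes $\log m$; hence, writing $\phi(m)$ for the minimum of $F$ over trees on $m$ leaves (with $\phi(1)=0$), one has $\phi(m)=\min\bigl\{[\binom{m}{2}-\sum_i\binom{m_i}{2}]\log m+\sum_i\phi(m_i)\bigr\}$ over all partitions $m_1+\dots+m_k=m$ with $k\ge 2$.

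First I would show the optimum is binary. Given any node with $k\ge 3$ children $T_1,\dots,T_k$ of sizes $m_1,\dots,m_k$, detach $T_2,\dots,T_k$, regroup them under a fresh node $\beta$ of size $m'=m-m_1$, and let the original node have children $T_1$ and $\beta$. All within-subtree contributions are unchanged, and a direct computation shows the change in cost equals $-B\log(m/m')$, where $B=\binom{m'}{2}-\sum_{i\ge 2}\binom{m_i}{2}>0$ is the number of pairs separated inside the group; since $m>m'$ this is strictly negative. Thus any split with $k\ge 3$ is strictly improvable, forcing $k=2$ at every internal node of an optimum.

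Next, restricting to binary trees, I would argue by strong induction that the balanced split is optimal at each node, giving $\phi(m)=\lfloor m/2\rfloor\lceil m/2\rceil\log m+\phi(\lfloor m/2\rfloor)+\phi(\lceil m/2\rceil)$. Assuming the claim for all sizes below $m$, the problem becomes minimizing $\psi(a)=a(m-a)\log m+\phi(a)+\phi(m-a)$ over $1\le a\le m-1$; comparing neighbouring splits yields $\psi(a+1)-\psi(a)=(m-1-2a)\log m+D\phi(a)-D\phi(m-a-1)$, where $D\phi(j)=\phi(j+1)-\phi(j)$. Moving toward balance (i.e.\ $a<(m-1)/2$) should lower the cost, which amounts to the key inequality $D\phi(m-a-1)-D\phi(a)\ge(m-1-2a)\log m$, equivalently $\sum_{j=a}^{m-a-2}D^2\phi(j)\ge(m-1-2a)\log m$ for the second difference $D^2\phi$.

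This last inequality is where I expect the real difficulty to lie, and it is genuinely tight: the root term $a(m-a)\log m$ is \emph{maximized} at balance and thus opposes it, while the recursive savings only barely dominate (for $K_8$, the balanced cost $68$ beats the $(1,7)$-split value $\approx 68.86$ by a hair, the corresponding inequality reading $15.25\ge 15$). In particular the per-term bound $D^2\phi(j)\ge\log m$ fails for small $j$, so one cannot argue term by term; the summed form must be used, exploiting that $D^2\phi$ is increasing and that the index range $[a,\,m-a-2]$ is centred near $m/2$, so that the larger second differences compensate the smaller ones. I would therefore strengthen the induction hypothesis to carry, alongside optimality of the balanced tree, a sharp enough lower bound on $D\phi$ (equivalently on partial sums of $D^2\phi$) to close this estimate, and from strictness deduce both that every optimum is balanced binary and, treating even and odd $m$ separately for the ``difference at most $1$'' characterization, the converse, giving the stated ``if and only if.'' Throughout I assume cluster trees are reduced, i.e.\ each internal node has at least two children, so that degenerate single-child paddings, which leave $F$ unchanged, are excluded.
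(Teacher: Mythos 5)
Your reduction to $F(T)=\sum_{(u,v)}\log|u\vee v|$ and your argument that every optimum is binary are both correct; the regrouping computation giving a strict decrease of $B\log(m/m')$ is sound, and it addresses a point the paper actually glosses over (its LCA decomposition $\gamma(N)=|A|\,|B|\log(|A|+|B|)$ already presupposes a binary tree). But the proposal has a genuine gap at exactly the step you flag: the inequality $D\phi(m-a-1)-D\phi(a)\ge(m-1-2a)\log m$, i.e.\ that moving a leaf toward balance never increases $\psi(a)=a(m-a)\log m+\phi(a)+\phi(m-a)$, is never proved. Everything before it is bookkeeping; this inequality \emph{is} the theorem. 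You correctly observe that it is nearly tight (your $K_8$ computation), that the root term $a(m-a)\log m$ opposes balance, and that per-term bounds on $D^2\phi$ fail, and you then propose to ``strengthen the induction hypothesis to carry a sharp enough lower bound on $D\phi$'' --- but you never state that invariant, let alone verify that it propagates through the recursion $\phi(m)=\lfloor m/2\rfloor\lceil m/2\rceil\log m+\phi(\lfloor m/2\rfloor)+\phi(\lceil m/2\rceil)$. Since $\phi$ has no closed form and its increments depend on the dyadic structure of $m$, finding an invariant that is simultaneously strong enough to dominate $(m-1-2a)\log m$ and weak enough to be inherited by the recursion is precisely the hard analytic content of the theorem, and the proposal leaves it entirely open.

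For comparison, the paper avoids discrete derivatives of $\phi$ altogether. Its proof of Lemma \ref{lem:Gamma_equiv} compares an arbitrary tree that is unbalanced at the root not with a neighbouring split but with the tree obtained by first balancing the two subtrees (using the inductive hypothesis), then \emph{swapping} the two grandchild subtrees $V_{lr}$ and $V_{rl}$ (Lemma \ref{lem:T2_T3_transplant}), and finally re-balancing. The swap leaves all within-subtree costs untouched, so the change in cost is an explicit expression $\Delta$ in the four sizes $n_{ll},n_{lr},n_{rl},n_{rr}$ alone, and proving $\Delta>0$ becomes a concrete (if laborious) exercise in convexity and calculus, carried out in three parity cases via Claims \ref{clm:A} and \ref{clm:B}. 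If you want to salvage your route, the most promising repair is to abandon the neighbouring-split comparison and instead compare the split $(a,m-a)$, with both subtrees balanced by induction, directly against a near-balanced split obtained by exchanging grandchild blocks --- which is essentially to rediscover the paper's transplant argument.
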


The proof of Theorem \ref{thm:SE_for_cliques} is a bit technical, and we defer it to Appendix \ref{sec:proof_thm_2.2}. The intuition behind Theorem \ref{thm:SE_for_cliques} is that balanced codes are the most efficient encoding scheme for unrelated data. So the codewords of the random walk that jumps freely among clusters on each level of a cluster tree have the minimum average length if all the clusters on this level are in balance. This is able to be guaranteed exactly by a balanced cluster tree.

In the cost function (\ref{eqn:SE_cost_form}), which we call cost(SE) from now on, $\log\vol(u\vee v)$ is a concave function of the volume of $u\vee v$. In the context of regular graphs (e.g. cliques), replacing $\vol(u\vee v)$ by $|u\vee v|$ is equivalent for optimization. Dasgupta \cite{dasgupta2016cost} claimed that for the clique $K_4$ of four vertices, a balanced tree is preferable when replace $|u\vee v|$ by $g(|u\vee v|)$ for any strictly increasing concave function $g$ with $g(0)=0$. However, it is interesting to note that this generalization does not hold for all those concave functions. For example, it is easy to check that for $g(x)=1-e^{-x}$, the cluster tree of $K_6$ that achieves minimum cost partitions $K_6$ into $K_2$ and $K_4$ on the first level, rather than $K_3$ and $K_3$. Theorem \ref{thm:SE_for_cliques} shows that for all cliques, balanced trees are preferable when $g$ is a logarithmic function.

It is worth noting the admissible function introduced by Cohen-Addad et al. \cite{cohen2019hierarchical} is defined from the viewpoint that a generating tree $T$ of a similarity-based graph $G$ that is generated from a minimal ultrametric achieves the minimum cost. In this definition, the monotonicity of edge weights between clusters on each level from bottom to top on $T$, which is given by Cohen-Addad et al. \cite{cohen2019hierarchical} as a property of a ``natural'' ground-truth hierarchical clustering, is the unique factor when evaluating $T$. However, Theorem \ref{thm:SE_for_cliques} implies that for cost(SE), besides cluster weights, the balance of cluster trees is implicitly involved as another factor. Moreover, for cliques, the minimum cost should be achieved on every subtree, which makes an optimal cluster tree balanced everywhere. This optimal clustering for cliques is also robust in the sense that a slight perturbation to the minimal ultrametric, which can be considered as slight variations to the weights of a batch of edges, will not change the optimal cluster tree structure wildly due to the holdback force of balance.

\section{Our hierarchical clustering algorithm} \label{sec:algorithm}

In this section, we develop an algorithm to optimize cost(SE) (Eq. (\ref{eqn:SE_cost_form}), equivalent to optimizing structure entropy) and yield the associated cluster tree. At present, all existing algorithms for hierarchical clustering can be categorized into two frameworks: top-down division and bottom-up agglomeration \cite{cohen2019hierarchical}. The top-down division approach usually yields a binary tree by recursively dividing a cluster into two parts with a cut-related criterion. But a binary clustering tree is far from a practical one as we introduced in Section \ref{sec:introduction}. For practical use, bottom-up agglomeration that is also known as hierarchical agglomerative clustering (HAC) is commonly preferable. It constructs a cluster tree from leaves to the root recursively, during each round of which the newly generated clusters shrink into single vertices.

Our algorithm jumps out of these two frameworks. We establish a new one that stratifies the \emph{sparsest} level of a cluster tree recursively rather than in a sequential order. In general, in guide with cost(SE), we construct a $k+1$-level cluster tree from the previous $k$-level one, during which the level whose stratification makes the average local cost that is incorporated in a local reduced subgraph decrease most is differentiated into two levels. The process of stratification consists of two basic operations: \emph{stretch} and \emph{compression}. In stretch steps, given an internal node of a cluster tree, a local binary subtree is constructed by an agglomerative approach, while in compression steps, the paths that are overlength from the root to leaves on the binary tree is compressed by shrinking tree edges that make the cost reduce most. This framework can be collocated with any cost function.

Then we define the operations ``stretch'' and ``compression'' formally. Given a cluster tree $T$ for graph $G=(V,E)$, let $u$ be an internal node on $T$ and $v_1,v_2,\ldots,v_\ell$ be its children. We call this local parent-children structure to be a \emph{$u$-triangle} of $T$, denoted by $T_u$. These two operations are defined on $u$-triangles. Note that each child $v_i$ of $u$ is a cluster in $G$. We reduce $G$ by shrinking each $v_i$ to be a single vertex $v_i'$ while maintaining each intra-link and ignoring each internal edge of $v_i$. This reduction captures the connections of clusters at this level in the parent cluster $u$. The stretch operation proceeds in HAC approach for $u$-triangle. That is, initially, view each $v_i'$ as a cluster and recursively combine two clusters into a new one for which cost(SE) drops most. The sequence of combinations yields a binary subtree $T_u'$ rooted at $u$ which has $v_1,v_2,\ldots,v_\ell$ as leaves. Then the compression operation is proposed to reduce the height of $T_u'$ to be $2$. Let $\hat{E}(T')$ be the set of edges on $T'$ each of which appears on a path of length more than $2$ from the root of $T'$ to some leaf. Denote by $\Delta(e)$ for edge $e$ be the amount of structural entropy enhanced by the shrink of $e$. We pick from $\hat{E}(T_u')$ the edge $e$ with least $\Delta(e)$. Note that the compression of a tree edge makes the grandchildren of some internal node to be children, it must amplify the cost. The compression operation picks the least amplification. The process of stretch and compression is illustrated in Figure \ref{fig:stretch_compress} and stated in Algorithms \ref{alg:stretch} and \ref{alg:compress}, respectively.

\begin{figure}[ht]	
	\centering
	\includegraphics[scale=0.5]{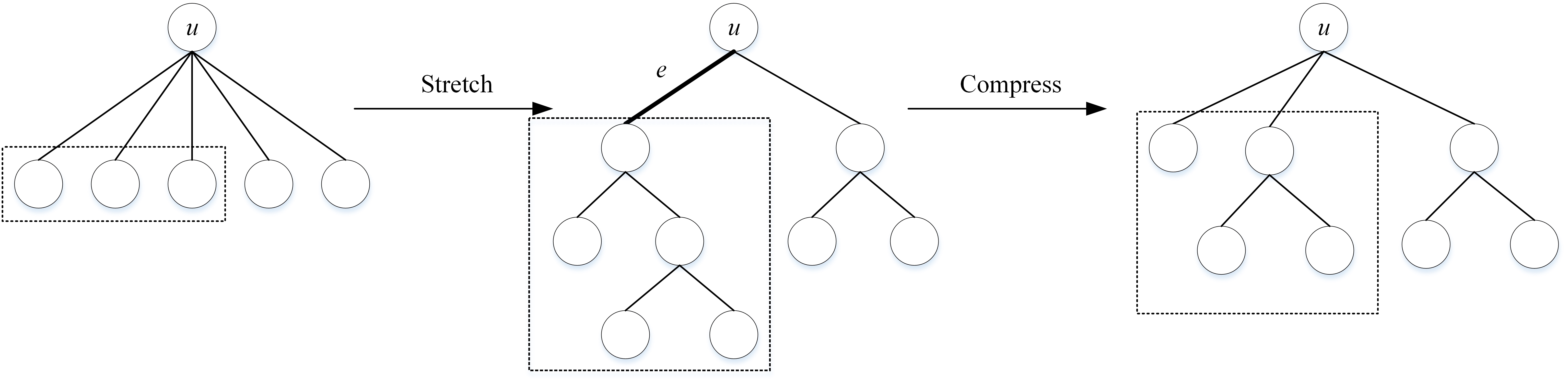}
	\caption{Illustration of stretch and compression for a $u$-triangle. A binary tree is constructed first by stretch, and then edge $e$ is compressed.}
	\label{fig:stretch_compress}
\end{figure}

\begin{algorithm} \label{alg:stretch}
	\caption{Stretch}
	\KwIn{a $u$-triangle $T_u$}
	\KwOut{a binary tree rooted at $u$}
	Let $\{v_1,v_2,\ldots,v_\ell\}$ be the set of leaves of $T_u$\;
	Compute $\eta(a,b)$ which is structural entropy reduced by merging siblings $a,b$ into a single cluster\;
	\For{$t\in [\ell-1]$}{
		$(\alpha,\beta) \gets \arg\max_{(a,b) \text{ are siblings}} \{\eta(a,b)\}$\;
		Add a new node $\gamma$\;
		$\gamma.parent \gets \alpha.parent$\;
		$\alpha.parent = \gamma$\;
		$\beta.parent = \gamma$\;
	}
	return $T_u$
\end{algorithm}

\begin{algorithm} \label{alg:compress}
	\caption{Compress}
	\KwIn{a binary tree $T$}
	\While{$T$'s height is more than $2$}{
		$e \gets \arg\min_{e'\in\hat{E}(T)} \{\Delta(e')\}$\;
		Denote $e=(u,v)$ where $u$ is the parent of $v$\;
		\For{$w\in v.children$}{
			$w.parent \gets u$\;
		}
	}	
\end{algorithm}

Then we define the sparsest level of a cluster tree $T$. Let $U_j$ be the set of $j$-level nodes on $T$, that is, $U_j$ is the set of nodes each of which has distance $j$ from $T$'s root. Suppose that the height of $T$ is $k$, then $U_0,U_1,\ldots,U_{k-1}$ is a partition for all internal nodes of $T$. For each internal node $u$, define $\mathcal{H}(u)=-\sum_{v:v^-=u} \frac{g_u}{\vol(V)} \log \frac{\vol(v)}{\vol(u)}$. Note that $\mathcal{H}(u)$ is the partial sum contributed by $u$ in $\mathcal{H}^T(G)$. After a ``stretch-and-compress'' round on $u$-triangle, denote by $\Delta\mathcal{H}(u)$ the structural entropy by which the new cluster tree reduces. Since the reconstruction of $u$-triangle stratifies cluster $u$, $\Delta\mathcal{H}(u)$ is always non-negative. Define the sparsity of $u$ to be
$\text{Spar}(u)=\frac{\Delta\mathcal{H}(u)}{\mathcal{H}(u)}$, which is the relative variation of structural entropy in cluster $u$. From the information-theoretic perspective, this means that the uncertainty of random walk can be measured locally in any internal cluster, which reflects the quality of clustering of this local area. At last, we define the \emph{sparsest level} of $T$ to be the $j$-th level such that the average sparsity of triangles rooted at nodes in $U_j$ is maximum, that is $\arg\max_j \{\overline{\text{Spar}}_j(T)\}$, where $\overline{\text{Spar}}_j(T)=\sum_{u\in U_j}\text{Spar(u)}/|U_j|$. Then the operation of stratification stretches and compresses on the sparsest level of $T$. This is illustrated in Figure \ref{fig:stratify}.

\begin{figure}[htbp]
\begin{center}	
	\subfigure[]{
		\begin{minipage}[t]{0.5\linewidth}
			\centering
			\includegraphics[width=6cm]{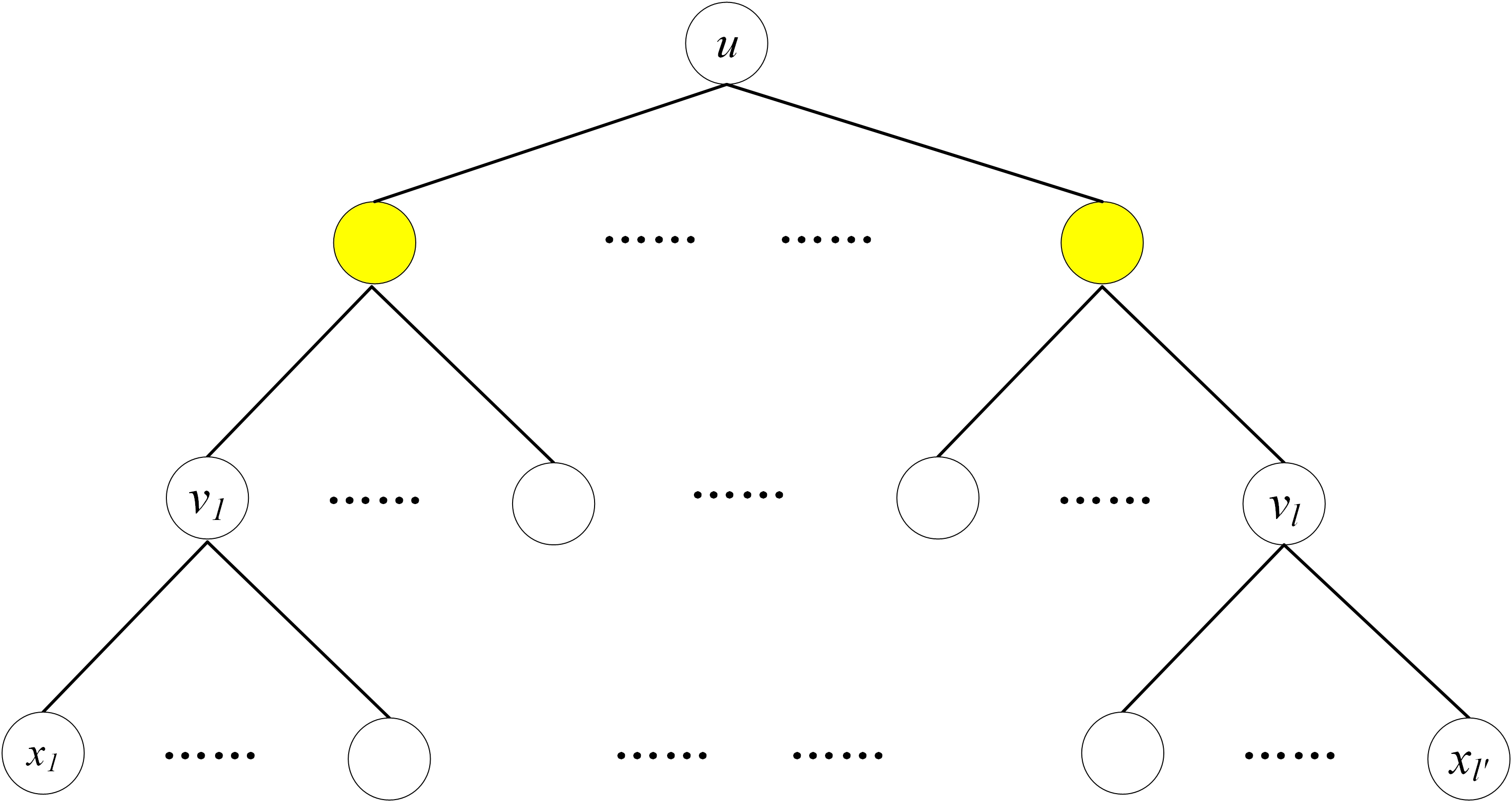}
			\label{fig:up}
		\end{minipage}%
	}%
	\subfigure[]{
		\begin{minipage}[t]{0.5\linewidth}
			\centering
			\includegraphics[width=6cm]{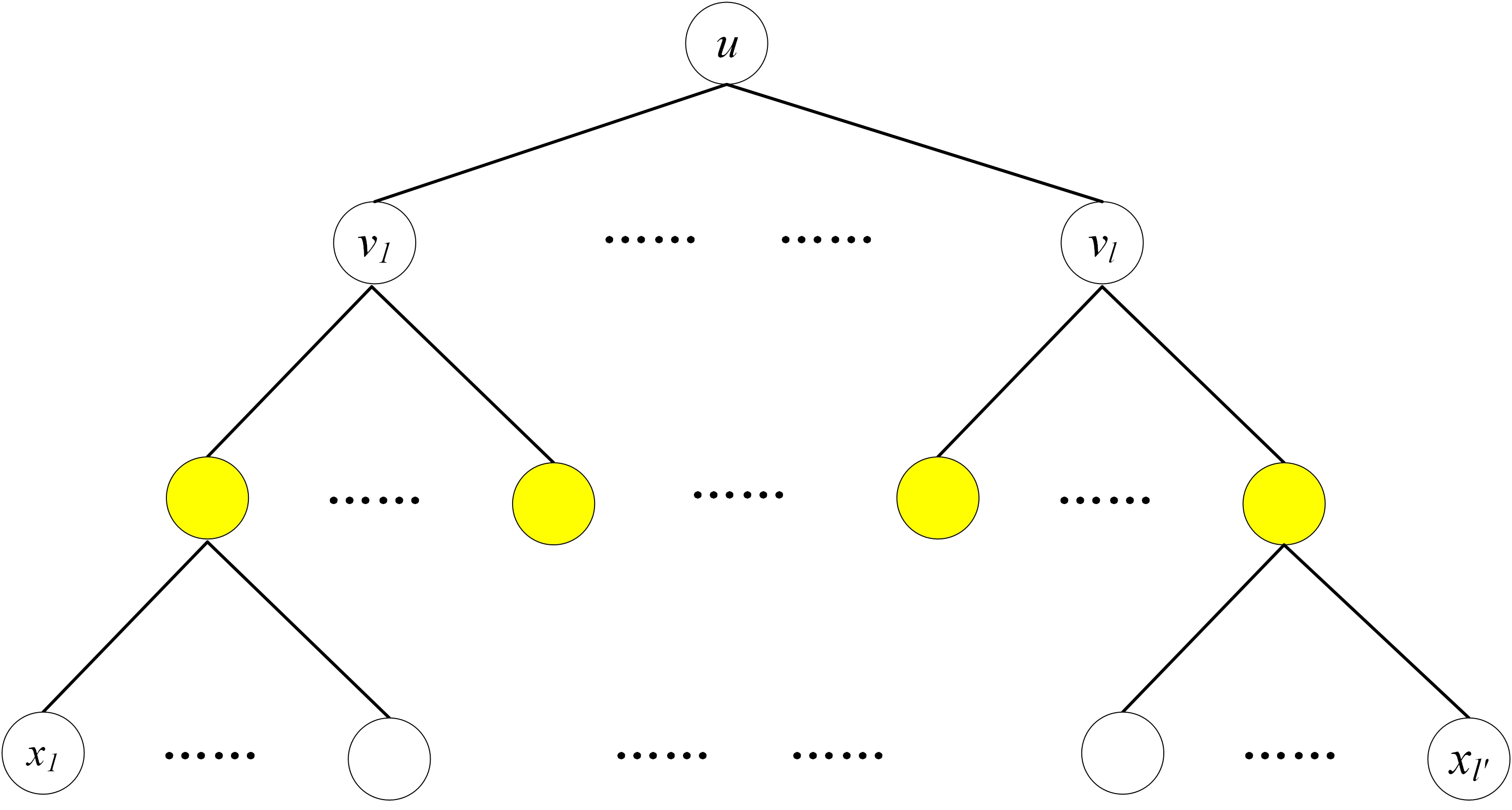}
			\label{fig:below}
		\end{minipage}%
	}%
	\caption{Illustration of stratification for a $2$-level cluster tree. The preference of (a) and (b) depends on the average sparsity of triangles at each level.}
	\label{fig:stratify}
\end{center}
\end{figure}

For a given positive integer $k$, to construct a cluster tree of height $k$ for graph $G$, we start from the trivial $1$-level cluster tree that involves all vertices of $G$ as leaves. Then we do not stop stratifying at the sparsest level recursively until a $k$-level cluster tree is obtained. This process is described in Algorithm \ref{alg:k-HCSE}.

\begin{algorithm} \label{alg:k-HCSE}
	\caption{$k$-Hierarchical clustering based on structural entropy ($k$-HCSE)}
	\KwIn{a graph $G = (V,E)$, $k\in\mathbb{Z}^+$}
	\KwOut{a $k$-level cluster tree $T$}
	Initialize $T$ to be the $1$-level cluster tree\;
	$h=\text{height(T)}$\;
	\While{$h<k$}{
		$j' \gets \arg\max_{j} \{\overline{\text{Spar}}_{j}(T)\}$; \quad // Find the sparsest level of $T$ (breaking ties arbitraily)\;
		\If{$\overline{\text{Spar}}_{j'}(T)=0$}{
			break;	\quad // No cost will be saved by any further clustering\;
		}
		\For{$u\in U_{j'}$}{
			$T_u \gets$ Stretch($u$-triangle $T_u$)\;
			Compress($T_u$)\;
			$h \gets h+1$\;
		}
		\For{$j\in [j'+1,h]$}{
			Update $U_j$\;
		}
	}
	return $T$
\end{algorithm}

To determine the height of the cluster tree automatically, we derive the natural clustering from the variation of sparsity on each level. Intuitively, a natural hierarchical cluster tree $T$ should have not only sparse boundary on clusters, but also low sparsity for triangles of $T$, which means that stratification within the reduced subgraphs corresponding to the triangles on the sparsest level make little sense. For this reason, we consider the inflection points of the sequence $\{\delta_t(\mathcal{H})\}_{t=1,2,\ldots}$, where $\delta_t(\mathcal{H})$ is the structural entropy by which the $t$-th round of stratification reduces. Formally, denote $\Delta_t\mathcal{H}=\delta_t(\mathcal{H})-\delta_{t-1}(\mathcal{H})$ for each $t\geq 2$. We say that $\Delta_t\mathcal{H}$ is an inflection point if both $\Delta_t\mathcal{H} \geq \Delta_{t-1}\mathcal{H}$ and $\Delta_t\mathcal{H} \geq \Delta_{t+1}\mathcal{H}$ hold. Our algorithm finds the least $t$ such that $\Delta_t\mathcal{H}$ is an inflection point and fix the height of the cluster tree to be $t$ (Note that after $t-1$ rounds of stratification, the number of levels is $t$). This process is described as Algorithm \ref{alg:HCSE}.

\begin{algorithm} \label{alg:HCSE}
	\caption{Hierarchical clustering based on structural entropy (HCSE)}
	\KwIn{a graph $G = (V,E)$}
	\KwOut{a cluster tree $T$}
	$t \gets 2$\;
	\While{$\Delta_t\mathcal{H} < \Delta_{t-1}\mathcal{H}$ or $\Delta_t\mathcal{H} < \Delta_{t+1}\mathcal{H}$}{
		\If{$\max_{j} \{\overline{\text{Spar}}_{j}(T)\}$=0}{
			break\;
		}
		$t \gets t+1$\;
	}
	return $t$-HCSE$(T)$
\end{algorithm}

\section{Experiments} \label{sec:experiments}

Our experiments are given both on synthetic networks generated from the Hierarchical Stochastic Block Model (HSBM) and on real datasets. We compare our algorithm HSE with the popular practical algorithms LOUVAIN \cite{blondel2008fast} and HLP \cite{rossi2020fast}. Both of these two algorithms construct a non-binary cluster tree with the same framework, that is, the hierarchies are formed from bottom to top one by one. In each round, they invoke different flat clustering algorithms, Modularity and Label Propagation, respectively. To avoid over-fitting to higher levels, which possibly results in under-fitting to lower levels, LOUVAIN admits a sequential input of vertices. Usually, to avert the worst-case trap, the order in which the vertices come is random, and so the resulting cluster tree depends on this order. HLP invokes the common LP algorithm recursively, and so it cannot be guaranteed to avoid under-fitting in each round. This can be seen in our experiments on synthetic datasets, for which these two algorithms usually miss a ground-truth level.

For real datasets, we do the comparative experiments on real networks. Some of them have (possibly hierarchical) ground truth, e.g., Amazon, while most of them do not have. We evaluate the resulting cluster trees for Amazon networks by Jaccard index, and for others without ground truth by both cost(SE) and Dasgupta's cost function cost(Das).

\subsection{HSBM}

The stochastic block model (SBM) is a type of probability generation model proposed based on the idea of random equivalence. The SBM of flat graphs needs to design the number of clusters, the number of vertices in each cluster, the probability of generating an edge for each pair of vertices in clusters, and the probability of generating an edge for each pair of vertices that are from different clusters. For HSBM, a ground truth of hierarchies should be presumed. Suppose that there are $m$ clusters at the bottom level. Then probability of generating edges is determined by a symmetric $m\times m$ matrix, in which the $(i,j)$-th entry is the probability of connecting each pair of vertices from cluster $i$ and $j$, respectively. Two clusters that have higher LCA on the ground-truth tree have lower probability.

Our experiments utilize $4$-level HSBM. For simplicity, let $\vec{p}=(p_0,p_1,p_2,p_3)$ be the probability vector for which $p_i$ is the probability of generating edges for vertex pairs whose LCA on the ground-truth cluster tree has depth $i$. Note that the $0$-depth node is the root. We compare the Normalized Mutual Information (NMI) at each level of the ground-truth cluster tree to those of three algorithms. Note that the randomness in LOUVAIN, and breaking-ties rule as well as convergence of HLP make different results, we choose the most effective strategy and pick the best results in five runs for both of them. Compared to their uncertainty, our algorithm HCSE yields stable results.

Table \ref{tab:HSBM_NMI} demonstrates the results in three groups of probabilities, for which the clarity of hierarchical structure gets higher one by one. Our algorithm HSE is always able to find the right number of levels, while LOUVAIN always misses the top level, and HLP misses the top level in two groups. The inflection points for choosing the intrinsic hierarchy number $t=4$ of hierarchies are demonstrated in Figure \ref{fig:inflection_points}.

\makeatletter
\newcommand\figcaption{\def\@captype{figure}\caption}
\newcommand\tabcaption{\def\@captype{table}\caption}
\makeatother

\begin{figure}[tb]
	\centering
	\begin{minipage}{0.42\textwidth}
		\centering
		\begin{tabular}{ccccc}
			\toprule
			&  $\vec{p}$  & HCSE     & HLP      & LOU  \\
			\midrule
			$p_2$ &  4.5E(-2)  & 0.89  & 0.79      & \textbf{0.92}\\
			$p_1$ &  1.5E(-3)  & \textbf{0.93} & 0.75       & 0.92 \\
			$p_0$ &  6E(-6)  & \textbf{0.62}       & 0.58       & \verb|--|\\
			\midrule
			$p_2$ &  5.5E(-2)  & 0.87  & \textbf{0.89}      & 0.89\\
			$p_1$ &  1.5E(-3)  & \textbf{0.95} & 0.87       & 0.87 \\
			$p_0$ &  4E(-6)  & \textbf{0.72}       & \verb|--|       & \verb|--| \\
			\midrule
			$p_2$ &  6.5E(-2)  & 0.96  & 0.95      & \textbf{0.99}\\
			$p_1$ &  4.5E(-3)  & 0.94 & 0.81       & \textbf{0.99} \\
			$p_0$ &  2.5E(-6)  & \textbf{0.80}       & \verb|--|       & \verb|--| \\
			\bottomrule
		\end{tabular}
		\tabcaption{\footnotesize NMI for three algorithms. Each dataset has $2,500$ vertices, and the cluster numbers at three levels are $5$, $25$ and $250$, respectively, for which the size of each cluster is accordingly
			generated at random. $p_3=0.9$ for each graph. ``$--$'' means the algorithm did not find this level.}
		\label{tab:HSBM_NMI}
	\end{minipage}
	\hspace{0.5in}
	\begin{minipage}[h]{0.42\linewidth}
		\centering
		\includegraphics[scale=0.4]{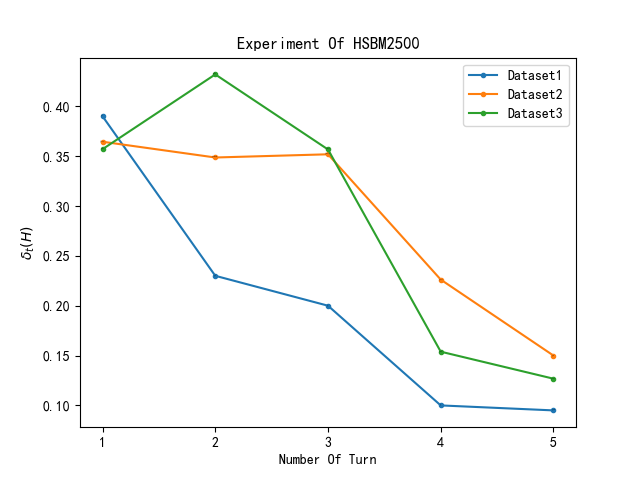}
		\figcaption{$\delta_t(\mathcal{H})$ variations for HCSE. It can be observed easily that the inflection points for all the three datasets appear on $t=4$, which is also the ground-truth number of hierarchies.}
		\label{fig:inflection_points}
	\end{minipage}
\end{figure}

\subsection{Real datasets}

First, we do our experiments on Amazon network \footnote{http://snap.stanford.edu/data/} for which the set of ground-truth clusters has been given. For two sets $A,B$, the \emph{Jaccard Index} of them is defined as $J(A,B)=|A\cap B|/|A\cup B|$. We pick the largest cluster which is a subgraph with $58283$ vertices and $133178$ edges. We run HCSE algorithm on it. For each ground-truth cluster $c$ that appears in this subgraph, we find from the resulting cluster tree an internal node that has maximum Jaccard index with $c$. Then we calculate the average Jaccard index $\overline{J}$ over all such $c$. We also calculate cost(SE) and cost(Das). The results are demonstrated in Table \ref{tab:Amazon}. HCSE performs better for $\overline{J}$ and cost(SE), while LOUVAIN performs better for cost(Das). Because of unbalance in over-fitting and under-fitting traps, HLP outperforms none of the other two algorithms for all criteria.

\begin{table}[htbp] 
	\centering
	\begin{tabular}{cccc}
		\toprule
		index     & \hspace{0.3em}HCSE     & HLP      & LOUVAIN  \\
		\midrule
		$\overline{J}$ & \textbf{0.20}  & 0.16      & 0.17\\
		cost(SE)    & \textbf{1.85E6} & 2.05E6 & 1.89E6 \\
		cost(Das)   & 5.57E8 & 3.99E8 & \textbf{3.08E8} \\
		\bottomrule
	\end{tabular}
	\smallskip
	\caption{\footnotesize Comparisons of the average Jaccard index ($\overline{J}$), cost function based on structural entropy (cost(SE)) and Dasgupta's cost function (cost(Das)).}
	\label{tab:Amazon}
\end{table}

Second, we do our experiments on a series of real networks \footnote{http://networkrepository.com/index.php} without ground truth. We compare cost(SE) and cost(Das), respectively. Since the different level numbers given by the three algorithms influence the costs seriously, that is, lower costs are obtained just due to greater heights, we only list in Table \ref{tab:real_datasets} the networks for which the three algorithms yield similar level numbers that differ by at most $1$ or $2$. It can be observed that HLP does not achieve optima for any network, while HCSE performs best w.r.t. cost(Das) for all networks, but does not outperform LOUVAIN for most networks. This is mainly due to the fact that LOUVAIN always finds no less number of hierarchies than HCSE, and the better cost probably benefits from its depth.

\begin{table}
	\centering
	\begin{tabular}{cccccccccc}
		\toprule
		Networks     & HCSE     & HLP      & LOUVAIN  \\
		\midrule
		CSphd         & 1.30E4 / \textbf{5.19E4} / 5  & 1.54E4 / 5.58E4 / 4      & \textbf{1.28E4} / 7.61E4 / 5\\
		\midrule
		fb-pages-government & 2.48E6 / \textbf{1.18E8} / 4  & 2.53E6 / 1.76E8 / 3      & \textbf{2.43E6} / 1.33E8 / 4\\
		\midrule
		email-univ & 1.16E5 / \textbf{2.20E6} / 3  & 1.46E5 / 6.14E6 / 3      & \textbf{1.14E5} / 2.20E6 / 4\\
		\midrule
		fb-messages & 1.58E5 / \textbf{4.50E6} / 4  & 1.76E5 / 8.12E6 / 3      & \textbf{1.52E5} / 4.96E6 / 4\\
		\midrule
		G22 & \textbf{5.56E5} / \textbf{2.68E7} / 4  & 6.11E5 / 4.00E7 / 3      & 5.63E5 / 2.80E7 / 5\\
		\midrule
		As20000102 & 2.64E5 / \textbf{2.36E7} / 4  & 3.62E5 / 7.63E7 / 3      & \textbf{2.42E5} / 2.42E7 / 5\\
		\midrule
		bibd-13-6 & \textbf{7.41E5} / \textbf{2.56E7} / 3  & 8.05E5 / 4.41E7 / 2      & 7.50E5 / 2.75E7 / 4\\
		\midrule
		delaunay-n10 & 4.65E4 / \textbf{3.39E5} / 4  & 4.87E4 / 3.55E5 / 4      & \textbf{4.24E4} / 4.25E5 / 5\\
		\midrule
		p2p-Gnutella05 & 9.00E5 / \textbf{1.48E8} / 3  & 1.01E6 / 2.78E8 / 3      & \textbf{8.05E5} / 1.49E8 / 5\\
		\midrule
		p2p-Gnutella08 & 5.59E5 / \textbf{5.51E7} / 4  & 6.36E5 / 1.28E8 / 4      & \textbf{4.88E5} / 6.03E7 / 5\\
		\bottomrule
	\end{tabular}
	\smallskip
	\caption{``cost(SE) / cost(Das) / $k$'' for three algorithms, where $k$ is the hierarchy number that the algorithm finds.}
	\label{tab:real_datasets}
\end{table}

\section{Conclusions and future discussions} \label{sec:conclusions}

In this paper, we investigate the hierarchical clustering problem from an information-theoretic perspective and propose a new objective function that relates to the combinatoric cost functions raised by Dasgupta \cite{dasgupta2016cost} and Cohen-Addad et al. \cite{cohen2019hierarchical}. We define the optimal $k$-level cluster tree for practical use and devise an hierarchical clustering algorithm that stratifies the sparsest level of the cluster tree recursively. This is a general framework that can be collocated with any cost function. We also propose an interpretable strategy to find the intrinsic number of levels without any hyper-parameter. The experimental results on $k$-level HSBM demonstrate that our algorithm HCSE has a great advantage in finding $k$ compared to the popular but strongly heuristic algorithms LOUVAIN and HLP. Our results on real datasets show that HCSE also achieves competitive costs compared to these two algorithms.

There are several directions that are worth further study. The first problem is about the relationship between the concavity of $g$ of the cost function and the balance of the optimal cluster tree. We have seen that for cliques, being concave is not a sufficient condition for total balance, so whether is it a necessary condition? Moreover, is there any explicit necessary and sufficient condition for total balance of the optimal cluster tree for cliques? The second problem is about approximation algorithms for both structural entropy and cost(SE). Due to the non-linear and volume-related function $g$, the proof techniques for approximation algorithms in \cite{cohen2019hierarchical} becomes unavailable. The third one is about more precise characterizations for ``natural'' hierarchical clustering whose depth is limited. Since any reasonable choice of $g$ makes the cost function achieve optimum on some binary tree, a blind pursuit of minimization of cost functions seems not to be a rational approach. More criteria in this scenario need to be studied.

\bibliographystyle{plain}
\bibliography{bibtex}

\appendix

\section{Proof of Theorem \ref{thm:SE_for_cliques}} \label{sec:proof_thm_2.2}

We restate Theorem 2.2 as follows.

\noindent\textbf{Theorem 2.2.} \emph{For any positive integer $n$, let $K_n$ be the clique of $n$ vertices with identical weight on every edge. Then a cluster tree $T$ of $K_n$ achieves minimum structural entropy if and only if $T$ is a balanced binary tree, that is, the two children clusters of each sub-tree of $T$ have difference in size at most $1$.}

Note that a balanced binary tree (BBT for abbreviation) means the tree is balanced on every internal node. Formally, for an internal node of cluster size $k$, its two sub-trees are of cluster sizes $\lfloor k/2 \rfloor$ and $\lceil k/2 \rceil$, respectively.

For cliques, since the weights of each edge are identical, we assume it safely to be $1$. By Theorem 2.1, minimizing the structural entropy is equivalent to minimizing the cost function (over $T$)
\begin{eqnarray*}
	\cost^T(G) &=& \sum\limits_{(u,v) \in E} \log \vol(u \vee v) \\
	&=& \sum\limits_{(u,v) \in E}\log\left((n-1)|u \vee v|\right) \\
	&=& \sum\limits_{(u,v) \in E}\log(n-1)+\sum\limits_{(u,v) \in E}\log|u \vee v|
\end{eqnarray*}
Since the first term in the last equation is independent of $T$, the optimization turns to minimizing the last term, which we denote by $\Gamma(T)$. Grouping all edges in $E$ by least common ancestor (LCA) of two end-points, the cost $\Gamma(T)$ can be written as the sum of the cost $\gamma$ at every internal node $N$ of $T$. Formally, for every internal node $N$, let $A,B \subseteq V$ be the leaves of the sub-trees rooted at the left and right child of $N$, respectively. We have
\begin{eqnarray*}
	\Gamma(T)&=&\sum\limits_{N}\gamma(N) \\
	\gamma(N)&=& \left(\sum\limits_{x \in A,y \in B}1 \right) \cdot \log\left(|A|+|B|\right) \\
	&=&|A|\cdot|B|\cdot\log(|A|+|B|)
\end{eqnarray*}
Now we only have to show the following lemma.

\begin{lemma} \label{lem:Gamma_equiv}
	For any positive integer $n$, a cluster tree $T$ of $K_n$ achieves minimum cost $\Gamma(T)$ if and only if $T$ is a BBT.
\end{lemma}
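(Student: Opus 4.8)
The plan is to prove Lemma~\ref{lem:Gamma_equiv} by strong induction on $n$, after first reducing the problem to binary trees.

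\emph{Reducing to binary trees.} First I would show that any internal node with three or more children can be replaced by a cascade of binary nodes while strictly decreasing $\Gamma$. If a node has children with leaf-set sizes $a,b,c,\dots$, then pairing two of them under a new binary node and recursing changes the contribution of these edges from $\left(\sum_{i<j}s_is_j\right)\log\left(\sum_i s_i\right)$ to a strictly smaller value; in the three-child case the excess of the multiway cost over the binary one works out to $ab\bigl[\log(a+b+c)-\log(a+b)\bigr]>0$. Hence every minimizer is binary, and from now on I may assume each internal node has exactly a left and a right child with leaf-set sizes summing to the node's size.

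\emph{Induction setup.} Let $B(m)$ denote the cost $\Gamma$ of a BBT on $m$ leaves, so $B(1)=0$ and $B(m)=\lfloor m/2\rfloor\lceil m/2\rceil\log m + B(\lfloor m/2\rfloor)+B(\lceil m/2\rceil)$. Assume inductively that for every $m<n$ the unique minimizer among $m$-leaf trees is the BBT, with cost $B(m)$. For an $n$-leaf tree whose root splits into sizes $a$ and $n-a$, the root contributes $a(n-a)\log n$ regardless of how each side is organized, and by the inductive hypothesis each side must itself be a BBT. Thus minimizing $\Gamma$ over $n$-leaf trees reduces to minimizing over the single integer $a\in\{1,\dots,n-1\}$ the function
\[
\phi(a)=a(n-a)\log n + B(a)+B(n-a),
\]
and it remains to show that $\phi$ is \emph{uniquely} minimized at the balanced split $a=\lfloor n/2\rfloor$ (equivalently $\lceil n/2\rceil$, the same split); combined with the inductive hypothesis this forces the unique global minimizer to be the BBT, balanced at every node.

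\emph{The crux.} Writing $D(m)=B(m)-B(m-1)$ for the marginal cost of the $m$-th leaf, the forward difference is
\[
\phi(a+1)-\phi(a)=(n-1-2a)\log n + D(a+1)-D(n-a).
\]
It therefore suffices to prove the strict inequality $D(n-a)-D(a+1)>(n-1-2a)\log n$ for every $1\le a<\lfloor n/2\rfloor$; by the symmetry $\phi(a)=\phi(n-a)$ this makes $\phi$ strictly decreasing up to the balanced split and strictly increasing afterwards, giving the unique minimizer.

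\emph{Main obstacle.} The hard part is exactly this marginal inequality. The term $a(n-a)\log n$ is \emph{maximized} at balance, so it opposes the claim and must be beaten by the subtree term $B(a)+B(n-a)$; a small computation (e.g.\ $n=4$, $a=1$ gives $D(3)-D(2)=2\log 3\approx 2.17$ against $(n-1-2a)\log n=2$) shows the inequality is essentially tight, so no crude estimate will suffice. The difficulty is aggravated by the floor/ceiling in the recurrence for $B$, which makes $D(m)$ depend irregularly on the parity and binary structure of $m$. I expect to handle this by first establishing that $D$ is increasing, and then bounding $D(m)$ tightly between expressions of the form $c\log m$ through a nested induction that unfolds the recurrence for $B$ across one halving step, with cases according to the parities of $n$ and of $a$. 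Making these bounds sharp enough to dominate $(n-1-2a)\log n$ is where the bulk of the technical effort will lie.
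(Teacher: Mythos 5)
Your reduction to binary trees and your induction setup are both sound: with strong induction, minimizing $\Gamma$ over $n$-leaf trees really does reduce to minimizing $\phi(a)=a(n-a)\log n + B(a)+B(n-a)$ over the root split $a$, and your forward-difference identity $\phi(a+1)-\phi(a)=(n-1-2a)\log n + D(a+1)-D(n-a)$ is correct. But the proposal stops exactly where the theorem's difficulty lives: the inequality $D(n-a)-D(a+1)>(n-1-2a)\log n$ for all $1\le a<\lfloor n/2\rfloor$ is labeled the crux and then only a plan is offered for it. This is not a routine verification that can be deferred as ``technical effort'': as your own example shows (where, incidentally, $D(3)-D(2)=2\log 3 - 1\approx 2.17$, not $2\log 3$), the inequality is nearly tight, and $B(m)$ is an irregular, parity-sensitive function with no clean closed form. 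Worse, the concrete plan you sketch cannot work as stated: you propose to bound $D(m)$ ``between expressions of the form $c\log m$'', but $D(m)=B(m)-B(m-1)$ grows like $\Theta(m\log m)$ (e.g.\ $D(5)=6\log 5+2\log 3 - 9\approx 9.1$), so bounds of the shape $c\log m$ are off by a factor of $m$; any usable estimate must capture the leading $m\log m$ term \emph{and} its lower-order, parity-dependent corrections precisely enough to beat $(n-1-2a)\log n$, and that analysis is entirely absent.

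It is instructive to see how the paper avoids ever needing global control of $B(m)$. Starting from an unbalanced tree $T_1$, it first replaces the two root subtrees by BBTs of the same sizes (cost does not increase, by induction, giving $T_2$), and then \emph{swaps} the two grandchild subtrees $V_{lr}$ and $V_{rl}$ across the root. Because the root subtrees of $T_2$ are BBTs, this single transplant makes the root split balanced; and because entire depth-$2$ subtrees are moved intact, the cost of every node below depth $1$ is unchanged, so only the contributions of three nodes (the root and its two children) change. The whole theorem thus localizes to one explicit inequality in $n_l,n_r$ with floors and ceilings (the paper's Lemma \ref{lem:T2_T3_transplant}), settled by convexity and calculus in three parity cases. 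In your framework this argument proves directly that $\phi(a)>\phi(\lfloor n/2\rfloor)$ for every unbalanced $a$ (swap, then rebalance the subtrees), bypassing the forward-difference strategy and the function $D$ altogether. So either graft that swap argument in as the proof of your crux, or be prepared to derive sharp two-sided estimates on $D(m)$ with parity case analysis --- as it stands, the proposal establishes the framing of the problem but not the theorem.
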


\begin{proof}
	
	Lemma \ref{lem:Gamma_equiv} is proved by induction on $|V|$. The key technique of tree swapping we use here is inspired by Cohen-Addad et al [4]. The basis step holds since for $|V|=2$ or $3$, the cluster tree is balanced and unique. It certainly achieves the minimum cost exclusively.
	
	Now, consider a clique $G=(V,E)$ with $n=|V| \ge 4$. Let $T_1$ be an arbitrary unbalanced cluster tree and $\lambda$ be its root. We need to prove that the cost $\Gamma(T_1)$ does not achieve the minimum.	Without loss of generality, we can safely assume the root node is unbalanced, since otherwise, we set $T_1$ to be the sub-tree that is rooted at an unbalanced node. Let $T_2$ be a tree with root $\lambda$ whose left and right sub-trees are BBTs such that they have the same sizes with the left and right sub-trees of $T_1$, respectively. Let $V_{ll}$, $V_{lr}$, $V_{rl}$ and $V_{rr}$ be the sets of nodes on the four sub-trees at the second level of $T_2$ and $n_{ll}$, $n_{lr}$, $n_{rl}$ and $n_{rr}$ denote their sizes, respectively. Our proof is also available when some of them are empty. We always assume $n_{ll} \le n_{lr}$ and $n_{rl} \ge n_{rr}$. Next, we construct $T_3$ by swapping (transplanting) $V_{lr}$ and $V_{rl}$ with each other. Finally, let $T_4$ be a tree with root $\lambda$ whose left and right sub-trees are BBTs after balancing the left and right sub-trees of $T_3$. So $T_4$ is a BBT. Then we only have to prove that $\Gamma(T_1) > \Gamma(T_4)$. Note that the strict ``$>$'' is necessary since we need to negate all unbalanced cluster trees.
	
	Then we show that the transformation process that consists of the above three steps makes the cost decrease step by step. Formally,
	\begin{itemize}
		\item[(a)] $T_1$ to $T_2$. The sub-trees of $T_1$ become BBTs in $T_2$. Since the number of edges whose end-points treat the root as LCA is the same, by induction we have $\Gamma(T_1) \ge \Gamma(T_2)$.
		\item[(b)] $T_2$ to $T_3$. We will show that $\Gamma(T_2) > \Gamma(T_3)$ in Lemma \ref{lem:T2_T3_transplant}.
		\item[(c)] $T_3$ to $T_4$. The sub-trees of $T_3$ become BBTs in $T_4$. For the same reason as (a), we have $\Gamma(T_3) \ge \Gamma(T_4)$. 
	\end{itemize}
	This transformation process is illustrated in Figure \ref{fig:transformation}. Putting them together, we get $\Gamma(T_1) > \Gamma(T_4)$ and Lemma \ref{lem:Gamma_equiv} follows.
	
	\begin{figure}[h]
		\centering
		\includegraphics[width=1\textwidth]{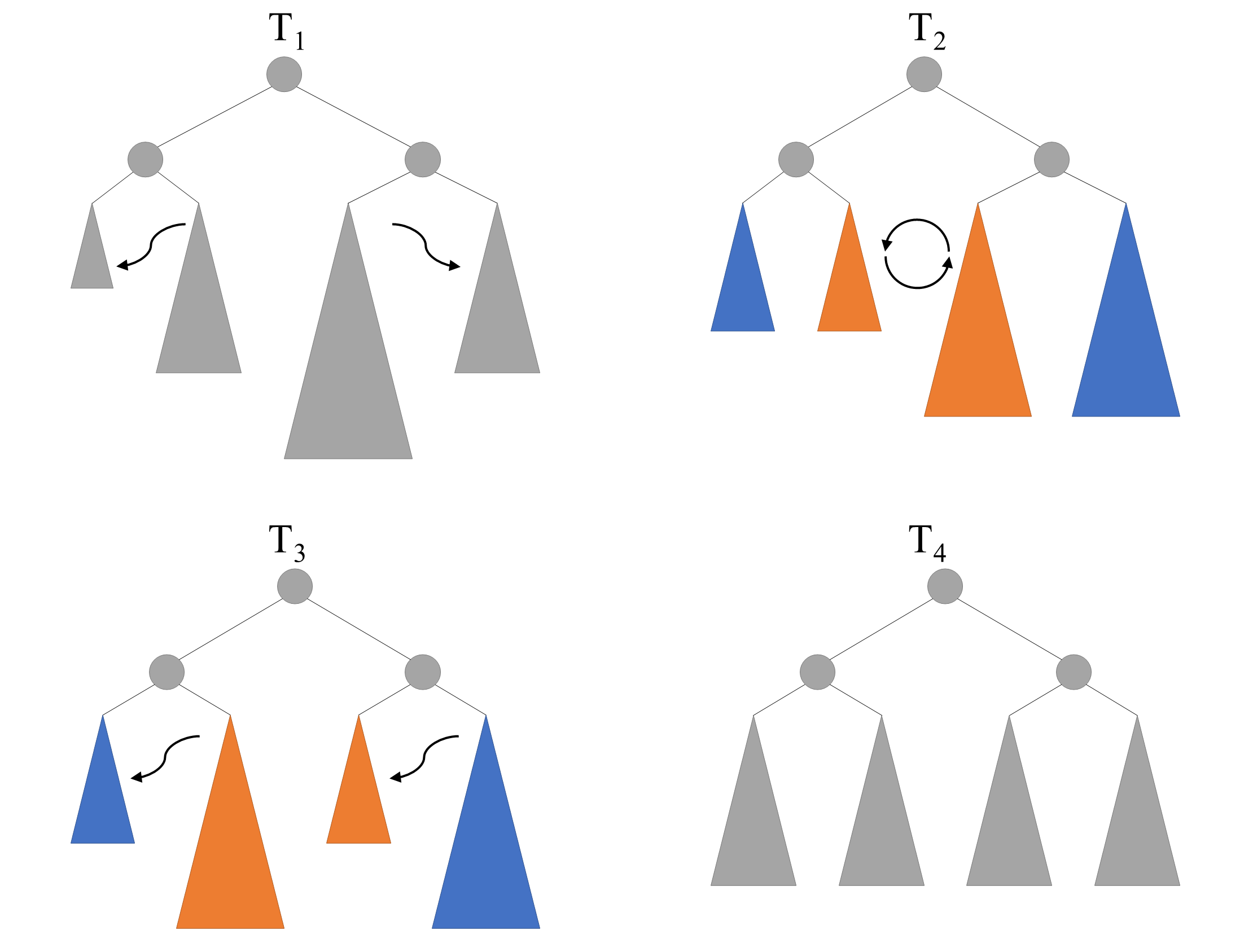}
		\caption{Illustration of transformation from $T_1$ to $T_4$.}
		\label{fig:transformation}
	\end{figure}
\end{proof}

\begin{lemma} \label{lem:T2_T3_transplant}
	After swapping $V_{lr}$ and $V_{rl}$, we obtain $T_3$ from $T_2$, for which $\Gamma(T_2) > \Gamma(T_3)$. 
\end{lemma}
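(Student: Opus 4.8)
The plan is to isolate exactly which internal nodes of the two trees contribute a different cost, reduce $\Gamma(T_2)-\Gamma(T_3)$ to one scalar inequality in the four block sizes, and then prove its positivity using the balanced-binary shape of the transplanted subtrees. First I would observe that the swap moves $V_{ll},V_{lr},V_{rl},V_{rr}$ as whole units, so their internal $\gamma$-values are identical in $T_2$ and $T_3$ and cancel; only the root $\lambda$ and its two children differ. Writing $a=n_{ll}$, $b=n_{lr}$, $c=n_{rl}$, $d=n_{rr}$ and $n=a+b+c+d$, and grouping the clique edges by the LCA that actually changes (the pairs between $V_{ll}\times V_{lr}$, $V_{ll}\times V_{rl}$, $V_{lr}\times V_{rr}$, $V_{rl}\times V_{rr}$), I get
\[\Gamma(T_2)-\Gamma(T_3)=ab\log(a+b)+cd\log(c+d)-ac\log(a+c)-bd\log(b+d)+(a-d)(c-b)\log n.\]
A convenient simplification is that, after extracting the $\log n$ hidden inside each $\log(x+y)$, the total coefficient of $\log n$ is $(ab+cd-ac-bd)+(a-d)(c-b)=(a-d)(b-c)-(a-d)(b-c)=0$, so the difference collapses to the scale-free, homogeneous-of-degree-two form
\[\Gamma(T_2)-\Gamma(T_3)=ab\log\frac{a+b}{n}+cd\log\frac{c+d}{n}-ac\log\frac{a+c}{n}-bd\log\frac{b+d}{n}.\]

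Next I would extract the structural constraints. Since the two subtrees of $T_2$ are balanced binary trees, $b-a\in\{0,1\}$ and $c-d\in\{0,1\}$; since the root is unbalanced I may assume $a+b\ge c+d+2$, which forces the ordering $b\ge a\ge c\ge d$ together with the strict gaps $a>d$ and $b>c$ (a short check on the parities of $n_L=a+b$ and $n_R=c+d$ rules out the degenerate equalities $a=d$ and $b=c$). I would emphasize that the root term $(a-d)(c-b)\log n$ is strictly negative: rebalancing the root maximizes the product of the two child volumes and hence \emph{raises} the root's own cost, so the entire gain must come from replacing the costly inner pairings $(a,b),(c,d)$ by $(a,c),(b,d)$. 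This is precisely where the concavity of $\log$ is doing the work, and it explains why the statement is not a free ``rearrangement'': the two pairings being compared are not both bipartite between $\{a,b\}$ and $\{c,d\}$, so no global super/submodularity of $xy\log\frac{x+y}{n}$ is available (its mixed second derivative changes sign on the feasible region).

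To prove positivity I would use the degree-two homogeneity to normalize and then split into the parity cases $b=a+\beta$, $c=d+\delta$ with $\beta,\delta\in\{0,1\}$. In the clean even-even case ($\beta=\delta=0$) the quantity reduces, after dividing by $a^2$ and setting $t=c/a\in(0,1)$, to
\[\phi(t)=2t\log 2-\log(1+t)-t^2\log(1+1/t),\]
and I would show $\phi(t)>0$ on the open interval by noting $\phi(0)=\phi(1)=0$ and tracking the sign of $\phi''$, which is concave near $0$ and convex near $1$ and changes sign exactly once, so $\phi$ cannot dip to zero in between. The remaining three cases carry an additive $+1$ that breaks scale invariance, so they do not collapse to a single variable; I would treat them by the same ``vanishing on the boundary plus one sign change of the second derivative'' strategy as genuine two-variable estimates, using the strict gaps $a>d$ and $b>c$ to keep the configuration off the degenerate locus.

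The hard part will be exactly these calculus estimates, and in particular the odd-parity cases. Unlike the even-even case they retain a real two-variable dependence, and because the scale-free form vanishes identically on the boundary $\{a=d\}\cup\{b=c\}$ of the feasible region, the argument must quantify how strict unbalancedness ($n_L\ge n_R+2$) pushes the block sizes into the interior; this is what upgrades a plausible ``$\ge 0$'' into the strict ``$>0$'' that Lemma~\ref{lem:T2_T3_transplant}, and therefore the only-if direction of the whole theorem, actually needs.
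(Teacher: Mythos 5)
Your reduction is sound and, in fact, coincides with the paper's: after isolating the root and its two children, your scale-free expression $ab\log\frac{a+b}{n}+cd\log\frac{c+d}{n}-ac\log\frac{a+c}{n}-bd\log\frac{b+d}{n}$ is exactly the paper's quantity $\Delta$ in Eq. (\ref{eqn:Delta}) (with $a,b,c,d$ standing for the floors and ceilings), and your structural constraints ($b-a,\,c-d\in\{0,1\}$, and $a>d$, $b>c$ forced by unbalancedness at the root) are correct. However, the proof itself has two genuine gaps.

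First, the positivity principle you rely on is false as stated. ``$\phi(0)=\phi(1)=0$, $\phi$ concave near $0$, convex near $1$, with exactly one sign change of $\phi''$, hence $\phi$ cannot dip below zero'' is refuted by $\phi(t)=\sin(2\pi t)$ on $[0,1]$: it satisfies every one of these hypotheses and is negative on $(1/2,1)$. A convex function lies \emph{below} its chords, so convexity on the right piece does not by itself prevent a negative dip that returns to $0$ at $t=1$. Your $\phi(t)=2t\log 2-\log(1+t)-t^2\log(1+\nicefrac{1}{t})$ is indeed positive on $(0,1)$, but closing the argument needs an extra ingredient such as $\phi'(1)=0$ (then convexity on $(t_0,1)$ gives $\phi'\le 0$ there, so $\phi$ decreases to $\phi(1)=0$, and concavity plus $\phi(0)=0$, $\phi(t_0)\ge 0$ handles the left piece). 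The paper instead dispatches this case in one line: dividing $p^2\log p+q^2\log q+2pq>0$ by $pq$ and applying strict convexity of $g(x)=\frac{x}{1-x}\log x$ yields $g(p)+g(q)>2g(\nicefrac12)=-2$.

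Second, and more seriously, the odd-parity cases are not proven at all: you defer them as ``the hard part,'' and the strategy you propose for them is the same invalid single-inflection principle, now as a two-variable estimate where even locating the inflection locus is nontrivial. This is precisely where the paper's proof must work hardest (Claims \ref{clm:A} and \ref{clm:B}): it substitutes $n_l=(n-i)/2$, $n_r=(n+i)/2$ with $n>i\ge 3$ odd, proves monotonicity in $n$ via the inequality $(1+x)\ln(1+x)+(1-x)\ln(1-x)-x^2>0$, reduces to the extremal configuration $(n_l,n_r)=(1,4)$, and bounds the residual term $B$ using explicit third-order Taylor bounds on $\ln(1\pm\nicefrac{1}{n})$, finishing by a numerical comparison. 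Nothing in your sketch indicates how the strict gaps $a>d$, $b>c$ would be converted into such quantitative bounds. Note also that the both-odd case, which you lump in with the hard ones, is actually easy: the paper writes $\Delta=\Delta_1+\Delta_2$ with $\Delta_1$ the even-even expression and $\Delta_2\ge 0$ by concavity of $\log$; only the mixed-parity case requires real work, and that work is missing from your proposal.
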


\begin{proof}
	We only need to consider the changes in cost of three nodes: root and its left and right children, since the cost contributed by each of the remaining nodes does not change after swapping. Ignoring the unchanged costs, define
	\begin{eqnarray*}
		\cost(T_2) &=& n_l n_r \log n+n_{ll} n_{lr} \log{n_l}+n_{rl} n_{rr} \log{n_r} \\
		&=& n_ln_r \log n+\left\lfloor \frac{n_l}{2} \right\rfloor \left\lceil \frac{n_l}{2} \right\rceil  \log{n_l}+\left\lceil \frac{n_r}{2} \right\rceil \left\lfloor \frac{n_r}{2} \right\rfloor \log{n_r},
	\end{eqnarray*}
	where $n_l = n_{ll} + n_{lr}$, $n_r = n_{rl} + n_{rr}$. Both of them are at least $1$. Similarly, define
	\begin{eqnarray*}
		\cost(T_3) &=& (n_{ll}+n_{rl})(n_{lr}+n_{rr}) \log n+n_{ll}n_{rl} \log{(n_{ll}+n_{rl})}+n_{lr}n_{rr} \log{(n_{lr}+n_{rr})} \\
		&=& \left\lfloor \frac{n}{2} \right\rfloor \left\lceil \frac{n}{2} \right\rceil \log n+\left\lfloor \frac{n_l}{2} \right\rfloor \left\lceil \frac{n_r}{2} \right\rceil \log\left(\left\lfloor \frac{n_l}{2} \right\rfloor+\left\lceil \frac{n_r}{2} \right\rceil\right)+\left\lceil \frac{n_l}{2} \right\rceil \left\lfloor \frac{n_r}{2} \right\rfloor \log \left(\left\lceil \frac{n_l}{2} \right\rceil+\left\lfloor \frac{n_r}{2} \right\rfloor \right) \\
	\end{eqnarray*}
	Denote
	\begin{eqnarray} \label{eqn:Delta}
		\Delta &=& \Gamma(T_2) - \Gamma(T_3) \nonumber \\
		&=& \cost(T_2) - \cost(T_3) \nonumber \\
		&=&\left\lfloor \frac{n_l}{2} \right\rfloor \left\lceil \frac{n_l}{2} \right\rceil \log\left(\frac{n_l}{n}\right)+\left\lceil \frac{n_r}{2} \right\rceil \left\lfloor \frac{n_r}{2} \right\rfloor \log\left(\frac{n_r}{n}\right) \nonumber \\
		& &-\left\lfloor \frac{n_l}{2} \right\rfloor \left\lceil \frac{n_r}{2} \right\rceil \log\left(\frac{\left\lfloor \frac{n_l}{2} \right\rfloor+\left\lceil \frac{n_r}{2} \right\rceil}{n}\right)-\left\lceil \frac{n_l}{2} \right\rceil \left\lfloor \frac{n_r}{2} \right\rfloor \log\left(\frac{\left\lceil \frac{n_l}{2} \right\rceil+\left\lfloor \frac{n_r}{2} \right\rfloor}{n}\right)
	\end{eqnarray}
	So we only have to show that $\Delta > 0$. We consider the following three cases according to the odevity of $n_l$ and $n_r$.
	\begin{itemize}
		\item[\textbf{Case $1$}:] $n_l$ and $n_r$ are even.
		\item[\textbf{Case $2$}:] $n_l$ and $n_r$ are odd.
		\item[\textbf{Case $3$}:] $n_l$ is odd while $n_r$ is even.
	\end{itemize}
	The case that $n_l$ is even while $n_r$ is odd is symmetric to \textbf{Case $3$}.
	
	For \textbf{Case $1$}, if both $n_l$ and $n_r$ are even, then notations of rounding in Eq. (\ref{eqn:Delta}) can be removed and $\Delta$ can be simplified as
	\begin{eqnarray*}
		\Delta=\frac{n_l^2}{4} \log\left(\frac{n_l}{n}\right)+\frac{n_r^2}{4} \log\left(\frac{n_r}{n}\right)+\frac{n_l n_r}{2}.
	\end{eqnarray*}
	Let $p=n_l/n,q=n_r/n$, and so $p+q=1$. Recall that $T_1$ is unbalanced on the root $\lambda$, so is $T_2$. Thus $p\neq q$. Multiplying by $\frac{4}{n^2}$ on both sides, we only have to prove that
	\[
	p^2 \log p+q^2 \log q+2pq > 0.
	\]	
	That is,
	$$\frac{p}{q} \log p+\frac{q}{p} \log q+2 > 0.$$			
	Let $g(x)=\frac{x}{1-x} \log x$. Then we only need to show that $g(p)+g(q)+2>0$ when $p\neq q$. Since
	\begin{eqnarray*}
		g'(x) &=& \frac{(1-x)+\ln x}{\ln 2 \cdot (1-x)^2}, \\
		g''(x) &=& -\frac{x^2-2x\ln x-1}{\ln 2 \cdot x(1-x)^3}.
	\end{eqnarray*}
	It is easy to check that $g''(x)>0$ when $0 < x < 1$. So $g(x)$ is strictly convex in the interval $(0,1)$. Since $p\neq q$,
	$$g(p)+g(q) > 2g\left(\frac{p+q}{2}\right) = -2.$$
	Thus $\Delta>0$ holds.
	
	For \textbf{Case $2$}, if both $n_l$ and $n_r$ are odd, then $\Delta$ can be split into two parts $\Delta=\Delta_1+\Delta_2$, in which
	\begin{eqnarray*}
		\Delta_1 &=& \frac{n_l^2}{4} \log\left(\frac{n_l}{n}\right)+\frac{n_r^2}{4} \log\left(\frac{n_r}{n}\right)+\frac{n_l n_r}{2} \\
		\Delta_2 &=& -\frac{1}{4} \log\left(\frac{n_l}{n}\right)-\frac{1}{4} \log\left(\frac{n_r}{n}\right)-\frac{1}{2}
	\end{eqnarray*}
	Since we have shown that $\Delta_1>0$, if we can prove $\Delta_2 \ge 0$, then the lemma will hold for \textbf{Case $2$}. Due to the convexity of logarithmic function, this holds clearly since
	\begin{eqnarray*}
		2 \log\left(\frac{n}{2}\right) \ge \log n_l + \log n_r.
	\end{eqnarray*}

	For \textbf{Case $3$}, if $n_l$ is odd while $n_r$ is even,
	\begin{eqnarray*}
		\Delta=\frac{n_l^2-1}{4} \log\left(\frac{n_l}{n}\right)+\frac{n_r^2}{4} \log\left(\frac{n_r}{n}\right)-\left[\frac{(n_l-1)n_r}{4} \log\left(\frac{n-1}{2n}\right)+\frac{(n_l+1)n_r}{4} \log\left(\frac{n+1}{2n}\right)\right].
	\end{eqnarray*}
	Multiplying the above equation by $4\ln 2$, without changing its sign, yields
	\begin{eqnarray*}
		(4\ln 2) \Delta = (n_l^2-1)\ln\left(\frac{n_l}{n}\right)+n_r^2\ln\left(\frac{n_r}{n}\right)-\left[(n_l-1)n_r\ln\left(\frac{n-1}{2n}\right)+(n_l+1)n_r\ln\left(\frac{n+1}{2n}\right)\right]
	\end{eqnarray*}	
	Splitting the right hand side into two parts,
	\begin{eqnarray*}
		A &=& n_l^2\ln\left(\frac{n_l}{n}\right)+n_r^2\ln\left(\frac{n_r}{n}\right)+2n_ln_r\ln2 \\
		B &=& -\ln\left(\frac{n_l}{n}\right)-(n_l+1)n_r\ln\left(1+\frac{1}{n}\right)-(n_l-1)n_r\ln\left(1-\frac{1}{n}\right)
	\end{eqnarray*}
	Since $n$ is odd and the root $\lambda$ of $T_2$ is unbalanced, we only need to consider the case that $n_l=(n-i)/2$, $n_r=(n+i)/2$ (Note that $n_l$ and $n_r$ are symmetric. So if $(n-i)/2$ is even, exchange $n_l$ and $n_r$), where both $n$ and $i$ are odd satisfying $n > i \ge 3$. Next we show that in this case, $A \ge \ln(1/5)+4^2\ln(4/5)+2 \cdot 4\ln2$ and $B > \ln2-3/4-(2/3) \cdot (1/5^2)$. By calculation, $\Delta=A+B > 0$ for \textbf{Case $3$}.		
	
	\begin{claim} \label{clm:A}
		$A \ge \ln(1/5)+4^2\ln(4/5)+2 \cdot 4\ln2$ for odd integers $n > i \ge 3$.
	\end{claim}
	
	\begin{proof}
		Substituting $n_l=(n-i)/2$, $n_r=(n+i)/2$ into the $A$ yields
		\begin{eqnarray*}
			A = C(n,i) \triangleq \left(\frac{n-i}{2}\right)^2\ln\left(\frac{n-i}{2n}\right)+\left(\frac{n+i}{2}\right)^2\ln\left(\frac{n+i}{2n}\right)+2\cdot\frac{n-i}{2}\cdot\frac{n+i}{2}\ln2.
		\end{eqnarray*}
		Treat $n$ as a continuous variable, we have
		\begin{eqnarray*}
			\frac{\partial C(n,i)}{\partial n}=\frac{1}{2} \left[{(n+i)\ln\left(1+\frac{i}{n}\right)+(n-i)\ln\left(1-\frac{i}{n}\right)-\frac{i^2}{n}}\right]
		\end{eqnarray*}
		Multiplying the above equation by $2/n$ and setting $x = i/n$ yields
		\begin{eqnarray*}
			f(x) &\triangleq& (1+x)\ln(1+x)+(1-x)\ln(1-x)-x^2, \\
			f'(x) &=& \ln(1+x)-\ln(1-x)-2x, \\
			f''(x) &=& \frac{2x^2}{1-x^2}.
		\end{eqnarray*}
		It is easy to check that $f(0) = 0$ and $f'(0) = 0$. When $0 < x < 1$,  $f''(x) > 0$. Thus $f'(x) > 0$ and $f(x) > 0$. This means that $\partial C(n,i)/\partial n > 0$ for all $n>0$. So $C(n,i) \ge C(i+2,i)$ for $n \ge i+2$ (When $i$ is fixed, the minimum value of $n$ can be taken to $i + 2$, which makes $n_l=(n-i)/2$ and $n_r=(n+i)/2$ integral). The curves of $C(n,i)$ for varying $i$ are plotted in Figure \ref{fig:C(n,i)}.
		
		\begin{figure}[htb]
			\centering
			\includegraphics[width=0.6\textwidth]{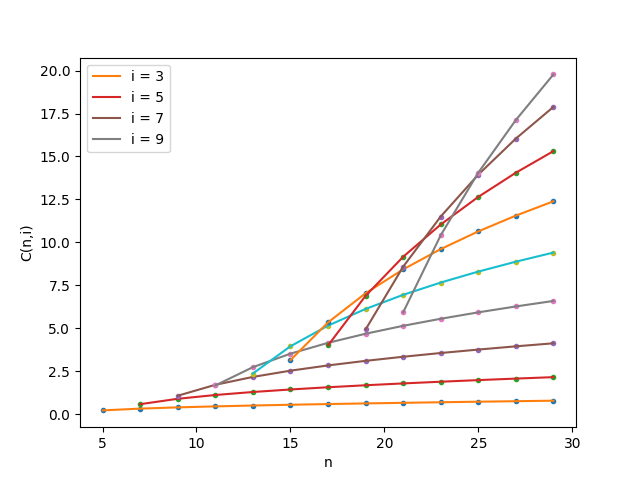}
			\caption{Functions $C(n,i)$}
			\label{fig:C(n,i)}
		\end{figure}
		
		When $n = i+2$, we get $n_l=(n-i)/2=1$ and $n_r=(n+i)/2=n-1$. Substituting them into $A$ yields
		\begin{eqnarray*}
			D(n) &\triangleq& \ln\left(\frac{1}{n}\right)+(n-1)^2\ln\left(1-\frac{1}{n}\right)+2(n-1)\ln2, \\
			\frac{d D}{dn} &=& 1-\frac{2}{n}+2\ln2+2(n-1)\ln\left(1-\frac{1}{n}\right).
		\end{eqnarray*}
		When $n>2$, it is easy to check that $d D/d n>0$. So the minimum value of $d(n)$, which is also the minimum value of $C(i+2,i)$, is achieved at $n=i+2=5$. So $A=C(n,i) \ge C(i+2,i) \ge C(5,3)=\ln(1/5)+4^2\ln(4/5)+2 \cdot 4\ln2$.
	\end{proof}
	
	\begin{claim} \label{clm:B}
		$B > \ln2-3/4-(2/3) \cdot (1/5^2)$.
	\end{claim}
	
	\begin{proof}
		Due to the facts that
		\begin{eqnarray*}
			\ln\left(1+\frac{1}{n}\right) &<& \frac{1}{n}-\frac{1}{2n^2}+\frac{1}{3n^3}, \\
			\ln\left(1-\frac{1}{n}\right) &<& -\frac{1}{n}-\frac{1}{2n^2}-\frac{1}{3n^3},
		\end{eqnarray*}
		we have
		\begin{eqnarray*}
			B &=& -\ln\left(\frac{n_l}{n}\right)-(n_l+1)n_r\ln\left(1+\frac{1}{n}\right)-(n_l-1)n_r\ln\left(1-\frac{1}{n}\right) \\
			&>& -\ln\left(\frac{n_l}{n}\right)+\frac{n_ln_r}{n^2}-\frac{2n_r}{n}-\frac{2n_r}{3n^3} \\
			&>& -\ln\left(\frac{n_l}{n}\right)+\frac{n_ln_r}{n^2}-\frac{2n_r}{n}-\frac{2}{3n^2}.
		\end{eqnarray*}
		Let $\alpha=n_l/n$, then
		\begin{eqnarray*}
			B &>& -\ln\alpha+\alpha(1-\alpha)-2(1-\alpha)-\frac{2}{3n^2} \\
			&\geq& \ln2-\frac{3}{4}-\frac{2}{3n^2}.
		\end{eqnarray*}
		When $n \ge 5$, $B > \ln2-3/4-(2/3) \cdot (1/5^2)$.
	\end{proof}
	Combining Claims \ref{clm:A} and \ref{clm:B}, Lemma \ref{lem:T2_T3_transplant} follows.
\end{proof}
This completes the proof of Theorem \ref{thm:SE_for_cliques}.

\end{document}